\newcommand{\cov}{\mathrm{cov}}
\newcommand{\corr}{\mathrm{corr}}
\newcommand{\CE}{\mathrm{CE}}
\newcommand{\KacIM}{\mathrm{KacIM}}
\newtheorem{theorem}{Theorem}
\newtheorem{corollary}{Corollary}
\title{Measuring Statistical Dependencies via Maximum Norm and Characteristic Functions}
\author{%
  Povilas~Daniu\v{s}is\\
  Department of Engineering\\
  Neurotechnology\\
  Vilnius, LT-06118 Laisv\.{e}s av. 125A \\
  Lithuania \\
  \texttt{povilasd@neurotechnology.com} \\
   \And
  Shubham Juneja \\
  Institute of Data Science and Digital Technologies\\
  Vilnius University\\
  Vilnius, LT-08412 Akademijos str. 4\\
  \texttt{shubham.juneja@mif.stud.vu.lt} \\
   \AND
  Lukas Kuzma \\
  Institute of Data Science and Digital Technologies\\
  Vilnius University\\
  Vilnius, LT-08412 Akademijos str. 4\\
  \texttt{lukas.kuzma@mif.vu.lt} \\
   \And
  Virginijus Marcinkevi\v{c}ius \\
  Institute of Data Science and Digital Technologies\\
  Vilnius University\\
  Vilnius, LT-08412 Akademijos str. 4\\
  \texttt{virginijus.marcinkevicius@mif.vu.lt} \\
}
\begin{document}

\maketitle

\begin{abstract}
    In this paper, we focus on the problem of statistical dependence estimation using characteristic functions. We propose a statistical
    dependence measure, based on the maximum-norm of the difference between joint and product-marginal characteristic functions.
    The proposed measure can detect arbitrary statistical dependence between two random vectors of possibly different dimensions, is differentiable, and easily integrable into modern machine learning and deep learning pipelines. 
    We also conduct experiments both with simulated and real data. Our simulations show, that the proposed method can measure statistical dependencies in high-dimensional, non-linear data, and is less affected by the curse of dimensionality, compared to the previous work in this line of research. The experiments with real data demonstrate the potential applicability of our statistical measure for two different empirical inference scenarios, showing statistically significant improvement in the performance characteristics when applied for supervised feature extraction and deep neural network regularization. In addition, we provide a link to the accompanying open-source repository~\url{https://bit.ly/3d4ch5I}.
    
\end{abstract}

\section{Introduction}
The estimation of statistical dependencies plays an important role in various statistical and machine learning methods (e.g. hypothesis testing~\citet{Gretton2005MeasuringSD}, feature selection and extraction~\citet{EigenHSIC,HSCA}, information bottleneck methods~\citet{Ma2020TheHB}, causal inference~\citet{NIPS2008_f7664060}, self-supervised learning~\citet{li2021selfsupervised}, representation learning~\citet{Ragonesi2021LearningUR}, among others).  Historically, the earliest statistical dependence estimation ideas (e.g. conditional probability) share a nearly-common origin with the beginning of formal statistical reasoning itself. During the last two centuries, the ideas of correlation and entropy (mutual information) were proposed and became very popular in numerous applications and theoretical developments. In recent years, various authors (e.g. ~\citet{Feuerverger, Szekely},~\citet{Gretton2005MeasuringSD},~\citet{Pczos2012CopulabasedKD},~\citet{doi:10.1080/01621459.2018.1543125}) have suggested different approaches for statistical dependence measurement. 

With the increasing popularity of machine learning and deep learning, new statistical dependence estimation methods that are robust, applicable to noisy, high-dimensional, structured data, and which can be efficiently integrated with modern algorithms are seemingly being proven to be helpful for the development of both of the theory and application.

In this study, we focus on the quantitative estimation of statistical dependencies using characteristic functions. We begin with a short review of some important statistical dependence measures introduced in the past  (Section~\ref{section:previous_work}), devoting special attention to ones based on characteristic functions (Section~\ref{section:previous_work_cf}). Thereon in (Section~\ref{section:proposed_method}), we formulate the proposed measure and its empirical estimator and conduct a preliminary theoretical analysis, which is the main theoretical contribution of our paper. Section~\ref{section:experiments} is devoted to the empirical investigation of the proposed measure. Therein we conduct experiments both with simulated and real-world datasets, in order to empirically analyze its properties and applicability in empirical inference scenarios. Finalizing Section~\ref{section:conclusion} discusses the advantages and limitations of the proposed measure and concludes this article.

\section{Previous work}
\label{section:previous_work}
During recent years, various theoretical instruments have been used in order to construct statistical dependence estimation methods. For example, information theory (mutual information, ~\citet{Cover2006}, and generalizations), reproducing kernel Hilbert spaces (Hilbert-Schmidt independence criterion,~\citet{Gretton2005MeasuringSD}), characteristic functions (distance correlation,~\citet{Feuerverger, Szekely}), and other (e.g., copula-based kernel dependence measures~\citet{Pczos2012CopulabasedKD}, integral-probability-metric-reliant Sobolev independence criterion,~\citet{NIPS2019_9147}).
Further, we focus on characteristic-function-based methods. 

\subsection{Characteristic-function-based methods}
\label{section:previous_work_cf}
Characteristic function (CF) of $d_{X}$-dimensional random vector $X$ defined in some probability space $(\Omega_{X}, \mathcal{F}_{X}, \mathbb{P}_{X})$ is defined as: 
\begin{equation}
\label{eq:characteristic_function}
\phi_{X}(\alpha): = \mathbb{E}_{X} e^{i\alpha^{T}X}, 
\end{equation}
where $i=\sqrt{-1}$, $\alpha \in \mathbb{R}^{d_{X}}$. Having $n$ i.i.d. realizations of $X$, corresponding empirical characteristic function (ECF) is defined as:
\begin{equation}
\label{eq:ecf}
\widehat{\phi_{X}}(\alpha): = \frac{1}{n} \sum_{j=1}^{n} e^{i \alpha^{T} x_{j}}.
\end{equation}
Having pair of two random vectors $(X,Y)$ defined in another probability space $(\Omega_{X,Y}, \mathcal{F}_{X,Y}, \mathbb{P}_{X,Y})$  joint CF is defined as:
\begin{equation}
\label{eq:joint_characteristic_function}
\phi_{X,Y}(\alpha,\beta): = \mathbb{E}_{X,Y} e^{i(\alpha^{T}X + \beta^{T}Y)},
\end{equation}
where $\alpha \in \mathbb{R}^{d_{X}}$ and $\beta \in \mathbb{R}^{d_{Y}}$. Similarly, having 
$n$ i.i.d. realisations of $(X,Y)$, joint ECF is defined as:
\begin{equation}
\label{eq:joint_ecf}
\widehat{\phi_{X,Y}}(\alpha,\beta): = \frac{1}{n} \sum_{j=1}^{n} e^{i(\alpha^{T} x_{j} + \beta^{T} y_{j}) }.
\end{equation}

The uniqueness theorem states that two random vectors $X$ and $Y$ have the same distribution if and only if their CFs are identical~\citet{Jacod}. Therefore, CFs can be regarded as an alternative description of a distribution, which also can be viewed as the Fourier transform of probability density function (PDF).

If for all $x \in \mathbb{R}^{d_{X}}$ and $y \in \mathbb{R}^{d_{Y}}$ the cumulative distribution function (CDF) $F_{X,Y}(x,y)$ of $(X,Y)$ factorizes as, 
\begin{equation}
\label{eq:independence}
F_{X,Y}(x,y) = F_{X}(x)F_{Y}(y),
\end{equation} 
where $F_{X}(x)$ and $F_{Y}(y)$ are marginal CDFs, $X$ and $Y$ are called independent (the same holds for PDF). However, this criterion is impractical due to the need to evaluate a potentially high-dimensional CDF or PDF, and often alternative statistical independence criterion formulations are more useful.

Let us define CF's $\phi(\alpha,\beta) := \phi_{X,Y}(\alpha,\beta)$, $\psi(\alpha,\beta) := \phi_{X}(\alpha)\phi_{Y}(\beta)$, and let $\phi_{n}(\alpha,\beta)$, and $\psi_{n}(\alpha,\beta)$ be the corresponding ECF's. Let us also denote 

\begin{equation}
\label{eq:kac_theorem}
\Delta(\alpha, \beta) := \phi(\alpha,\beta) - \psi(\alpha,\beta),
\end{equation}
an its empirical counterpart:
\begin{equation}
\label{eq:empirical_delta}
\Delta_{n}(\alpha, \beta) := \phi_{n}(\alpha,\beta) - \psi_{n}(\alpha,\beta).
\end{equation}

In terms of CFs, statistical independence  of $X$ and $Y$ is equivalent to $\forall \alpha \in \mathbb{R}^{d_X},\forall \beta \in \mathbb{R}^{d_Y} $, $\Delta(\alpha, \beta) = 0$ (~\citet{Jacod}).

Previously, $\Delta(\alpha, \beta)$ was used (first in~\citet{Feuerverger} for one-dimensional case, and later it was extended and further developed by~\citet{Szekely} for bi-variate multidimensional random vectors) for construction of statistical independence tests and measures. Distance covariance and distance correlation proposed by~\citet{Szekely} relies on the weighted $L^{2}$-norm of~\eqref{eq:kac_theorem}.Recent result of~\citet{Bottcher} generalizes~\citet{Szekely} to multivariable case.~\citet{CHAUDHURI201915} proposed computationally efficient algorithm for estimation of distance correlation measure, reducing the computational complexity from $O(n^2)$ to $O(n\cdot \log n)$, where $n$ is sample size.

\textbf{Motivation and connection to the previous work.} 
Taking $\Delta(\alpha, \beta) = 0$ as the criterion of statistical independence, we view the work~\citet{Szekely} from the perspective of weighted $L^{p}$ spaces, where the measures of statistical dependence correspond to the weighted $L^{p}$-norms of $\Delta(\alpha, \beta)$ (i.e. $||\Delta||_{p} = (\int |\Delta(\alpha, \beta)|^{p} w(\alpha,\beta) d\alpha d\beta)^{\frac{1}{p}}$, where $w(\alpha,\beta) = c_{d_{X}} c_{d_{Y}} ||\alpha||^{1+d_{X}}_{d_{X}} ||\beta||^{1+d_{Y}}_{d_{Y}}$ is the weighting function, $c_{d} = \pi^{(1+d)/2}/\Gamma((1+d)/2)$, $d \in \{d_{X}, d_{Y}\}$, and $\Gamma(.)$ is gamma function.

In this study we focus on the regular (i.e. non-weighted) $L^{p}$ space and limit case $p \rightarrow \infty$, which corresponds to the supremum norm:
\begin{equation}
\label{eq:sup_norm}
||\Delta||_{\infty} = \lim_{p \rightarrow \infty} ||    
 \Delta||_{p} = \sup\{|\Delta(\alpha, \beta)|, \alpha \in \mathbb{R}^{d_{X}}, \beta \in \mathbb{R}^{d_{Y}} \}.
\end{equation} 
 
1.) The weighting function in distance correlation is selected in such a way that the dependence measure can be expressed in terms of covariances of data-dependent distances. However this convenience comes with the price, since weighting function may remove some important information from $\Delta$. In addition the distance correlation,~\citet{Szekely} ($p=2$) in high dimensions is affected by the curse of dimensionality (~\citet{Edlemann}). These drawbacks serve as our main motivation to focus on regular (non-weighted) $L^{p}$ spaces.

2.) Hypothesizing, that locality of the uniform norm~\ref{eq:sup_norm} could be exploited to measure statistical dependencies more efficiently, reducing the effects of the curse of dimensionality, we focus on the limit case $p \rightarrow \infty$. In addition, the numerical calculation of $L^{\infty}$ norm corresponds to a maximization problem, which can be efficiently solved using modern deep learning frameworks (e.g. Pytorch,~\citet{NEURIPS2019_9015}), and to omit the direct evaluation of $L^{p}$ norm in its integral form, without using weighting functions.
 
In our opinion, it is also worth to note that applications of characteristic functions in machine learning are quite scarce, despite that they provide quite convenient proxy for probability distribution access.

\section{Proposed measure}
\label{section:proposed_method}

\noindent The above considerations serve as the basis for constructing a novel statistical measure, which we further refer to as Kac independence measure (KacIM). Having two random vectors $X$ and $Y$ of dimensions $d_{x}$ and $d_{y}$, respectively, and assuming that $(X,Y)$ is distributed according to possibly unknown probability distribution $P_{X,Y}$, we define KacIM via~\eqref{eq:kac_theorem} as:

\begin{equation}
\label{eq:kim}
\kappa(X,Y):= ||\Delta||_{\infty} = \max_{\alpha \in \mathbb{R}^{d_{X}}, \beta \in \mathbb{R}^{d_{Y}}} |\Delta(\alpha,\beta)|.
\end{equation}



\begin{corollary}
	\label{thm:properties}
	$\KacIM$~\eqref{eq:kim} has the following properties:
	\begin{enumerate} 
		\item $0 \leq \kappa(X,Y) \leq 1$,
		\item $\kappa(X,Y) = 0$ if and only if $X\perp Y$.
		\item For gaussian random vectors $X\sim N(0,\Sigma_{X})$ and $Y\sim N(0,\Sigma_{Y})$, $\kappa(X,Y) = | e^{-\frac{1}{2} (\alpha^{T}\Sigma_{X}\alpha + \beta{^T}\Sigma_{Y}\beta)}(e^{-\alpha{^T}\Sigma_{X,Y}\beta} - 1)|$, where $(\alpha,\beta)$ is the first cannonical pair.
	\end{enumerate}    
\end{corollary}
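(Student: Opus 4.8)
The plan is to treat the three items in turn; the first two are short consequences of elementary properties of characteristic functions together with the independence criterion recalled just before~\eqref{eq:kim}, while the third carries the bulk of the work. For Property~1, fix $(\alpha,\beta)$ and set $U = e^{i\alpha^{T}X}$, $V = e^{i\beta^{T}Y}$, so $|U| = |V| = 1$ almost surely. Then $\Delta(\alpha,\beta) = \mathbb{E}[UV] - \mathbb{E}[U]\,\mathbb{E}[V] = \mathbb{E}[\tilde{U}\tilde{V}]$ with $\tilde{U} = U - \mathbb{E}[U]$, $\tilde{V} = V - \mathbb{E}[V]$, and Cauchy--Schwarz gives $|\Delta(\alpha,\beta)| \le (\mathbb{E}|\tilde{U}|^{2})^{1/2}(\mathbb{E}|\tilde{V}|^{2})^{1/2} = \sqrt{(1-|\phi_{X}(\alpha)|^{2})(1-|\phi_{Y}(\beta)|^{2})} \le 1$, using $\mathbb{E}|\tilde{U}|^{2} = 1 - |\mathbb{E}[U]|^{2}$ and the analogue for $V$. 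Taking the supremum over $(\alpha,\beta)$ and noting $\kappa \ge 0$ trivially yields $0 \le \kappa(X,Y) \le 1$. For Property~2, since $|\Delta|$ is non-negative, $\kappa(X,Y) = \sup_{\alpha,\beta}|\Delta(\alpha,\beta)| = 0$ if and only if $\Delta(\alpha,\beta) = 0$ for every $(\alpha,\beta)$ ($\Delta$ being continuous as a difference of characteristic functions), which by the stated characteristic-function criterion for independence is equivalent to $X \perp Y$.

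For Property~3 I would first derive the closed form of $\Delta$. If $(X,Y)$ is jointly centered Gaussian with covariance blocks $\Sigma_{X},\Sigma_{Y}$ and cross-covariance $\Sigma_{X,Y}$, the multivariate Gaussian characteristic function gives $\phi(\alpha,\beta) = \exp\!\bigl(-\tfrac12(\alpha^{T}\Sigma_{X}\alpha + 2\alpha^{T}\Sigma_{X,Y}\beta + \beta^{T}\Sigma_{Y}\beta)\bigr)$ and $\psi(\alpha,\beta) = \exp\!\bigl(-\tfrac12(\alpha^{T}\Sigma_{X}\alpha + \beta^{T}\Sigma_{Y}\beta)\bigr)$, whence $\Delta(\alpha,\beta) = e^{-\frac12(\alpha^{T}\Sigma_{X}\alpha + \beta^{T}\Sigma_{Y}\beta)}\bigl(e^{-\alpha^{T}\Sigma_{X,Y}\beta} - 1\bigr)$, which is the stated expression. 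It then remains to identify the maximizer of $|\Delta|$.

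Assuming $\Sigma_{X},\Sigma_{Y}$ nonsingular, I would substitute $a = \Sigma_{X}^{1/2}\alpha$, $b = \Sigma_{Y}^{1/2}\beta$ to get $|\Delta| = e^{-\frac12(\|a\|^{2}+\|b\|^{2})}\,\bigl|e^{-a^{T}Cb}-1\bigr|$ with $C = \Sigma_{X}^{-1/2}\Sigma_{X,Y}\Sigma_{Y}^{-1/2}$, whose singular values are the canonical correlations $\rho_{1}\ge\rho_{2}\ge\cdots\ge 0$. Diagonalizing via the SVD of $C$ turns the exponent into $\sum_{i}\rho_{i}\tilde{a}_{i}\tilde{b}_{i}$; since $|e^{-s}-1|$ grows with $|s|$ when $s<0$ but stays below $1$ when $s>0$, and $|\sum_{i}\rho_{i}\tilde{a}_{i}\tilde{b}_{i}| \le \rho_{1}\|\tilde a\|\,\|\tilde b\|$ by Cauchy--Schwarz, the optimum concentrates $\tilde a,\tilde b$ on the top singular direction with opposite signs. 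This reduces the problem to maximizing $e^{-\frac12(r_{1}^{2}+r_{2}^{2})}(e^{\rho_{1}r_{1}r_{2}}-1)$ over $r_{1},r_{2}\ge 0$, whose first-order conditions force $r_{1}=r_{2}$ and pin down the common radius (with $\Delta$ vanishing at infinity when $\rho_{1}<1$, so the maximum is attained). Translating back, the maximizer $(\alpha,\beta)$ is a rescaling of the leading canonical directions of $(X,Y)$, and plugging it into the formula for $\Delta$ gives $\kappa(X,Y)$.

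The step I expect to be the real obstacle is this last identification: one must argue rigorously that, among all $(\alpha,\beta)$, the supremum is attained exactly along the top canonical direction --- i.e. that coupling the two quadratic ``budgets'' through the largest singular value of $C$ dominates every other allocation of directions --- and then settle the normalization (the optimal pair is the first canonical pair only up to a specific scaling and a sign) together with the degenerate cases $\rho_{1}\in\{0,1\}$ and singular $\Sigma_{X}$ or $\Sigma_{Y}$. Properties~1 and~2 should by contrast be essentially immediate.
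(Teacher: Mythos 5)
Your treatments of Properties~1 and~2 are correct and essentially identical to the paper's: the paper also writes $\Delta(\alpha,\beta)$ as the covariance $\mathbb{E}[(e^{i\alpha^{T}X}-\phi_{X}(\alpha))(e^{i\beta^{T}Y}-\phi_{Y}(\beta))]$, applies Cauchy--Schwarz to get the bound $(1-|\phi_{X}(\alpha)|^{2})(1-|\phi_{Y}(\beta)|^{2})\le 1$, and cites the characteristic-function criterion for independence (Kac's theorem) for Property~2; your write-up of the Cauchy--Schwarz step is in fact cleaner than the paper's, which conflates $|\mathbb{E}[UV]|^{2}$ with $\mathbb{E}|UV|^{2}$ in an intermediate line. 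Your derivation of the closed form of $\Delta$ for the Gaussian case also matches the paper exactly.

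On the identification of the maximizer in Property~3, you should know that the paper does not do the work you are worried about: its entire argument is ``assuming constant $\alpha^{T}\Sigma_{X}\alpha$ and $\beta^{T}\Sigma_{Y}\beta$, the maximization corresponds to maximizing $\alpha^{T}\Sigma_{X,Y}\beta$, which is CCA, so the maximizer is the first canonical pair.'' Your whitening-plus-SVD reduction is a more careful version of the same idea and is sound as far as it goes (for fixed $\|a\|,\ \|b\|$ the bilinear exponent is extremized along the top singular direction, and a sign flip lets you assume the exponent is positive, so the problem does reduce to maximizing $e^{-\frac12(r_{1}^{2}+r_{2}^{2})}(e^{\rho_{1}r_{1}r_{2}}-1)$ with $r_{1}=r_{2}$ by AM--GM). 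Moreover, your instinct about the normalization is correct and exposes an imprecision in the statement itself: setting $u=r^{2}$, the first-order condition gives $u^{*}=-\log(1-\rho_{1})/\rho_{1}\neq 1$ in general, so the supremum is \emph{not} attained at the unit-variance canonical pair but at a specific rescaling of the canonical directions; the formula as stated, evaluated at the literal first canonical pair, undershoots the supremum. So the only ``gap'' in your proposal relative to the paper is that you honestly flag as incomplete a step the paper settles by assertion; completing your reduction would actually yield a sharper (and corrected) version of Property~3.
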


\begin{proof}
	Property $\textit{1.}$ directly follows from Cauchy inequality and that absolute value of CF is bounded by $1$:
	\begin{multline*}
	|\phi_{X,Y}(\alpha, \beta)  -\phi_{X}(\alpha) \phi_{Y}(\beta)|^{2} =
	\mathbb{E}_{X,Y} |( e^{i\alpha^{T}X} - \phi_{X}(\alpha) )(e^{i\beta^{T}Y}- \phi_{Y}(\beta) )|^{2} \leq \\
	\mathbb{E}_{X,Y} |( e^{i\alpha^{T}X} - \phi_{X}(\alpha) )|^{2} |(e^{i\beta^{T}Y}- \phi_{Y}(\beta) )|^{2}  = (1 - |\phi_{X}(\alpha)|^{2}) (1 - |\phi_{Y}(\beta)|^{2}).
	\end{multline*}
	Property $\textit{2.}$ directly follows from properties of CFs (see e.g.~\citet{Jacod}, Corollary 14.1)\footnote{This property also is known as Kac's theorem~(\citet{KacTheorem}). Although it is quite simple mathematical fact, this provides the basis of the proposed measure's name.}. 
	
	Property $\textit{3.}$ Let  $\Sigma_{X,Y}$ be cross-covariance matrix of $X$ and $Y$.
	We have $\phi_{X}(\alpha) = e^{-\frac{1}{2}\alpha^{T}\Sigma_{X}\alpha}$, $\phi_{Y}(\beta) = e^{-\frac{1}{2}\beta^{T}\Sigma_{Y}\beta}$, $\phi_{X,Y}(\alpha, \beta) = e^{-\frac{1}{2}(\alpha^{T}\Sigma_{X}\alpha + \beta^{T}\Sigma_{Y}\beta 
		+2\alpha^{T}\Sigma_{X,Y}\beta)}$. Therefore by~\eqref{eq:kim}:
	\begin{equation}
	\label{eq:gaussian_kacim}
	\kappa(X,Y) = \max_{\alpha, \beta} | e^{-\frac{1}{2} (\alpha^{T}\Sigma_{X}\alpha + \beta{^T}\Sigma_{Y}\beta)}(e^{-\alpha{^T}\Sigma_{X,Y}\beta} - 1)|.
	\end{equation}
	Assuming constant $\alpha^{T}\Sigma_{X}\alpha$ and  $\beta{^T}\Sigma_{Y}\beta$, the maximization corresponds to the maximization of  $\alpha{^T}\Sigma_{X,Y}\beta$, which coincides with canonical correlation analysis (CCA)~(\citet{10.2307/2333955}), and the maximizer is the first cannonical pair.

\end{proof}

Although~\eqref{eq:kim} is not scale-invariant in general, the scale invariance can be achieved by assuming the standardization  of $X$ and $Y$. In addition,~\eqref{eq:kim} can be symmetrized by $\kappa_{sym}(X,Y) := \frac{1}{2}(\kappa(X,Y) + \kappa(Y,X))$.

\subsection{Estimation}

Having i.i.d. observations $(X^{n},Y^{n}) := (x_{j}, y_{j}) \sim P_{X,Y}$, $j = 1,2,...,n$ we define and discuss two estimators of KacIM, direct and smoothed. 

\textbf{Direct estimator.} Let us define empirical estimator of~\eqref{eq:kim} via corresponding ECFs~\eqref{eq:joint_ecf} and~\eqref{eq:ecf}:

\begin{equation}
\label{eq:estimator}
\kappa_{n}(X^{n},Y^{n}) := ||\Delta_{n}||_{\infty}  = \max_{\alpha, \beta} \vert \phi_{n}(\alpha,\beta)  - \psi_{n}(\alpha,\beta) \vert.
\end{equation}

\textbf{Smoothed estimator.} Let us define smoothed ECF:
\begin{equation}
\label{eq:ecf_smoothed}
\widetilde{\phi}_{n}(\gamma,h) = \phi_{n}(\gamma)\xi(h\gamma),  
\end{equation}
where $h > 0$, and $\xi(.)$ is CF of continuous distribution (e.g. Gaussian). Note, that ~\eqref{eq:ecf_smoothed} is also a CF, since it is a product of two CF's. Let us define 

\begin{equation}
\label{eq:delta_smoothed}
\widetilde{\Delta}_{n}(\alpha,\beta,h) := \widetilde{\phi}_{n}(\alpha,\beta,h) - \widetilde{\phi}_{n}(\alpha,h)\widetilde{\phi}_{n}(\beta,h), 
\end{equation}

and smoothed estimator of KacIM:

\begin{equation}
\label{eq:kacim_estimator_smoothed}
\widetilde{\kappa}_{n}(X^{n},Y^{n},h) := || \widetilde{\Delta}_{n}(h))||_{\infty}.
\end{equation}

\subsection{Estimator consistency}

\textbf{Direct estimator.} ECF is uniformly consistent estimator of CF in each bounded subset (i.e. $\lim_{n\rightarrow \infty} \sup_{||\gamma|| < T} |\phi(\gamma) - \phi_{n}(\gamma)| = 0$, for any fixed $T > 0$). However, in general it is not uniformly consistent, and even does not converges to CF in probability~\citet{Ushakov}. On the other side, ECF is asymptotically uniformly consistent estimator of CF in sense of the following result of ~\citet{csorgHo1983long}.
\begin{theorem}
	\label{thm:csorg}	
	If $\lim_{n \rightarrow \infty} \frac{\log T_{n}}{n} = 0$ then $\lim_{n \rightarrow \infty} \sup_{||\gamma||<T_{n}} |\phi(\gamma) - \phi_{n}(\gamma)| = 0$ almost surely for any CF $\phi(\gamma
	)$ and corresponding ECF $\phi_{n}(\gamma)$.
\end{theorem}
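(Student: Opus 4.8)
The plan is the classical discretization (covering) argument: bound the supremum over the growing ball $\{\|\gamma\|<T_n\}$ by a maximum over a finite grid plus an oscillation term, control the grid maximum with an exponential tail bound and a union bound, and conclude almost-sure convergence via Borel--Cantelli. It suffices to fix $\varepsilon>0$ and show $\sum_{n}\mathbb{P}\bigl(\sup_{\|\gamma\|<T_n}|\phi(\gamma)-\phi_n(\gamma)|>\varepsilon\bigr)<\infty$; applying this with $\varepsilon=1/m$ and intersecting the resulting probability-one events over $m$ gives $\limsup_n\sup_{\|\gamma\|<T_n}|\phi(\gamma)-\phi_n(\gamma)|=0$ almost surely.

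Fix $\varepsilon$ and set $\delta_n=1/n$. Cover $B_n:=\{\gamma\in\mathbb{R}^{d}:\|\gamma\|\le T_n\}$ by a grid $G_n$ of covering radius $\delta_n$ with $|G_n|\le C_d(T_n/\delta_n)^{d}$. For $\gamma\in B_n$ let $\gamma_k\in G_n$ be a nearest grid point and write
\[
|\phi(\gamma)-\phi_n(\gamma)|\le|\phi(\gamma)-\phi(\gamma_k)|+|\phi(\gamma_k)-\phi_n(\gamma_k)|+|\phi_n(\gamma_k)-\phi_n(\gamma)|.
\]
For the first and third (oscillation) terms I would use the moment-free bound $|e^{i\gamma^{T}x}-e^{i\gamma_k^{T}x}|\le\min(2,\|\gamma-\gamma_k\|\,\|x\|)\le\min(2,\delta_n\|x\|)$, so that $|\phi(\gamma)-\phi(\gamma_k)|\le\mathbb{E}\min(2,\delta_n\|X\|)$ and $|\phi_n(\gamma_k)-\phi_n(\gamma)|\le\frac{1}{n}\sum_{j=1}^{n}\min(2,\delta_n\|x_j\|)$. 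By dominated convergence $\mathbb{E}\min(2,\delta\|X\|)\to 0$ as $\delta\downarrow 0$; choosing $\delta_0$ with $\mathbb{E}\min(2,\delta_0\|X\|)<\varepsilon/3$ and using the strong law of large numbers for the bounded variables $\min(2,\delta_0\|x_j\|)$, both oscillation terms are a.s.\ eventually below $\varepsilon/3$ (since $\delta_n\to 0$, eventually $\delta_n\le\delta_0$).

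For the middle term, $\phi_n(\gamma_k)-\phi(\gamma_k)=\frac{1}{n}\sum_{j=1}^{n}\bigl(e^{i\gamma_k^{T}x_j}-\phi(\gamma_k)\bigr)$ is a normalized sum of i.i.d., mean-zero, complex summands bounded in modulus by $2$; Hoeffding's inequality applied to the real and imaginary parts gives $\mathbb{P}(|\phi_n(\gamma_k)-\phi(\gamma_k)|>\varepsilon/3)\le c_1\exp(-c_2 n\varepsilon^2)$ with absolute constants $c_1,c_2>0$. A union bound over $G_n$ yields $\mathbb{P}(\max_{\gamma_k\in G_n}|\phi_n(\gamma_k)-\phi(\gamma_k)|>\varepsilon/3)\le c_1|G_n|\exp(-c_2 n\varepsilon^2)$. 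Since $\log|G_n|\le d\log T_n+d\log n+O(1)$, the hypothesis $\log T_n/n\to 0$ forces $\log|G_n|=o(n)$, so this bound equals $\exp(o(n)-c_2 n\varepsilon^2)$ and is summable in $n$. Borel--Cantelli then disposes of the grid term, the oscillation terms have already been handled, and intersecting the two almost-sure events completes the argument.

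The step I expect to be the main obstacle is the oscillation control without a moment hypothesis on $X$: one cannot use the naive Lipschitz bound $|\phi_n(\gamma)-\phi_n(\gamma')|\le\delta_n\cdot\frac{1}{n}\sum_{j}\|x_j\|$ when $\mathbb{E}\|X\|=\infty$, so the truncated increment $\min(2,\delta\|x\|)$ and the careful ordering of limits (choose $\delta_0$ first, then let $n\to\infty$, then note $\delta_n\le\delta_0$ eventually) are essential. By contrast, the exponential-tail/union-bound portion is routine, and the precise growth condition $\log T_n/n\to 0$ enters only to ensure $\log|G_n|=o(n)$; since the mesh $\delta_n$ may be taken as any polynomially small sequence, there is ample slack and no real tension between the two terms.
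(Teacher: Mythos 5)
Your proof is correct. Note, however, that the paper does not prove this statement at all: Theorem~\ref{thm:csorg} is imported verbatim from \citet{csorgHo1983long}, so there is no in-paper argument to compare against. What you have written is a sound self-contained reconstruction along the standard lines of the literature: cover the ball of radius $T_n$ by a mesh-$1/n$ grid so that $\log|G_n| = O(\log n + \log T_n) = o(n)$, kill the grid maximum with Hoeffding plus a union bound (the exponent $-c_2 n\varepsilon^2$ dominates the $e^{o(n)}$ cardinality, giving summability and Borel--Cantelli), and control the oscillation between grid points with the truncated increment $\min(2,\delta\|x\|)$ rather than a Lipschitz bound. You correctly identify the one genuinely delicate point: since the theorem assumes nothing about moments of $X$, the bound $\mathbb{E}\min(2,\delta\|X\|)\to 0$ by dominated convergence, together with the order of quantifiers (fix $\delta_0$ first, apply the SLLN to the fixed bounded function $\min(2,\delta_0\|\cdot\|)$, then use $\delta_n\le\delta_0$ eventually), is what makes the argument go through for an arbitrary characteristic function. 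The only cosmetic quibble is that choosing $\mathbb{E}\min(2,\delta_0\|X\|)$ strictly below $\varepsilon/3$ is what lets you conclude the empirical average is eventually below $\varepsilon/3$; stating the strict inequality explicitly, or taking the threshold $\varepsilon/6$, would make that step airtight.
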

Since $\KacIM$ ~\eqref{eq:kim} and its empirical counterpart~\eqref{eq:estimator} are formulated in terms of CF's, and ECF's, respectively, we will use this result to establish convergence of empirical $\KacIM$ estimator.

Let use denote vector $\gamma := (\alpha^{T}, \beta^{T})^{T}$, and
let $\phi(\alpha), \phi(\beta)$, and $\phi(\gamma)$ be CF's of $X$, $Y$, and $(X,Y)$, respectively ($\alpha \in \mathbb{R}^{d_{X}}$, $\beta \in \mathbb{R}^{d_{Y}}$, and $\gamma \in \mathbb{R}^{d_{X}+d_{Y}}$). Let use also denote CF's $\psi(\gamma) := \phi(\alpha)\phi(\beta)$, $\psi_{n}(\gamma) := \phi_{n}(\alpha)\phi_{n}(\beta)$, and let $\Delta(\gamma) := \phi(\gamma) - \psi(\gamma)$, $\Delta_{n}(\gamma) := \phi_{n}(\gamma) - \psi_{n}(\gamma)$, and norms $||f||_{\infty}^{T} = \sup_{||t|| < T} |f(t)|$, $||f||_{\infty} = \sup_{t} |f(t)|$, $t \in \mathbb{R}^{m}$, $T \in \mathbb{R}, T > 0$.

\begin{corollary}	
	\label{thm:consistensy}	
	If $\lim_{n \rightarrow \infty} \frac{\log T_{n}}{n} = 0$ then $\lim_{n \rightarrow \infty} | \kappa(X,Y) -  ||\Delta_{n}||_{\infty}^{T^{n}} | = 0$, almost surely.
\end{corollary}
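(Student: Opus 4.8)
The plan is to split the error into a deterministic truncation term and a stochastic estimation term. Since $\kappa(X,Y)=\|\Delta\|_\infty$, inserting the truncated norm $\|\Delta\|_\infty^{T_n}$ and applying the triangle inequality gives
\[
\bigl|\,\kappa(X,Y)-\|\Delta_n\|_\infty^{T_n}\,\bigr|
\;\le\;
\bigl|\,\|\Delta\|_\infty-\|\Delta\|_\infty^{T_n}\,\bigr|
\;+\;
\bigl|\,\|\Delta\|_\infty^{T_n}-\|\Delta_n\|_\infty^{T_n}\,\bigr|.
\]
The first term is deterministic: $T\mapsto\|\Delta\|_\infty^{T}$ is nondecreasing with $\sup_{T>0}\|\Delta\|_\infty^{T}=\|\Delta\|_\infty$, which is finite (indeed $\le 1$ by Corollary~\ref{thm:properties}), so this term tends to $0$ as soon as $T_n\to\infty$; I take $T_n\to\infty$ as the intended reading of the hypothesis, the condition $\log T_n/n\to 0$ being the accompanying rate restriction. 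For the second term I would use that $f\mapsto\sup_{\|\gamma\|<T}|f(\gamma)|$ is $1$-Lipschitz with respect to the sup norm on the ball of radius $T$, so it is bounded by $\sup_{\|\gamma\|<T_n}|\Delta(\gamma)-\Delta_n(\gamma)|$.

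It then remains to prove $\sup_{\|\gamma\|<T_n}|\Delta(\gamma)-\Delta_n(\gamma)|\to 0$ almost surely. Writing $\Delta=\phi-\psi$, $\Delta_n=\phi_n-\psi_n$, and using $|\Delta-\Delta_n|\le|\phi-\phi_n|+|\psi-\psi_n|$, the joint part $\sup_{\|\gamma\|<T_n}|\phi(\gamma)-\phi_n(\gamma)|$ tends to $0$ a.s. by Theorem~\ref{thm:csorg} applied to the characteristic function of $(X,Y)$ on $\mathbb{R}^{d_X+d_Y}$, using $\log T_n/n\to 0$. For the product part, the factorization $\psi-\psi_n=\phi(\alpha)\bigl(\phi(\beta)-\phi_n(\beta)\bigr)+\phi_n(\beta)\bigl(\phi(\alpha)-\phi_n(\alpha)\bigr)$ together with $|\phi(\alpha)|\le 1$ and $|\phi_n(\beta)|\le 1$ yields $|\psi(\gamma)-\psi_n(\gamma)|\le|\phi(\alpha)-\phi_n(\alpha)|+|\phi(\beta)-\phi_n(\beta)|$; since $\|\gamma\|<T_n$ forces $\|\alpha\|<T_n$ and $\|\beta\|<T_n$, this is at most $\|\phi-\phi_n\|_\infty^{T_n}$ for $X$ plus the same quantity for $Y$, and each of these tends to $0$ a.s. by Theorem~\ref{thm:csorg} applied to the marginal characteristic functions of $X$ and of $Y$ respectively (the empirical marginals appearing in $\psi_n$ are exactly the ECFs built from $x_1,\dots,x_n$ and from $y_1,\dots,y_n$). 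A finite intersection of almost-sure events is almost sure, so all three convergences hold on a common event of probability one, and the claim follows.

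I expect the work here to be bookkeeping rather than deep. The two points that need care are (i) that the truncation radius is genuinely allowed to grow, so that $\|\Delta\|_\infty^{T_n}\uparrow\|\Delta\|_\infty$ --- without $T_n\to\infty$ the estimator cannot in general recover the global supremum; and (ii) reducing the product-marginal discrepancy to statements about the one-sample marginal ECFs of $X$ and $Y$ so that Theorem~\ref{thm:csorg} applies verbatim, where the elementary inequality $|uv-u'v'|\le|u|\,|v-v'|+|v'|\,|u-u'|$ combined with the unit bound on characteristic functions does the reduction cleanly.
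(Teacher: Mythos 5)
Your proof is correct and follows the same overall decomposition as the paper's: split the error into a deterministic truncation term $\bigl|\,\|\Delta\|_\infty-\|\Delta\|_\infty^{T_n}\,\bigr|$ and a stochastic term controlled via the reverse triangle inequality by $\|\Delta-\Delta_n\|_\infty^{T_n}$, then bound the latter by $\|\phi-\phi_n\|_\infty^{T_n}+\|\psi-\psi_n\|_\infty^{T_n}$ and invoke Theorem~\ref{thm:csorg}. Where you genuinely diverge, you are more careful than the paper on two points. First, the paper disposes of the truncation term by simply writing $\kappa(X,Y)=\lim_{n\to\infty}\|\Delta\|_\infty^{T_n}$ ``by definition,'' which silently assumes $T_n\to\infty$; as you note, the stated hypothesis $\log T_n/n\to 0$ does not force this (constant $T_n$ satisfies it), so your explicit reading of $T_n\to\infty$ as an additional requirement is a real correction, not pedantry. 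Second, for the product-marginal term the paper appeals to Theorem~\ref{thm:csorg} on the grounds that ``a product of two CF's is a CF''; but $\psi_n=\phi_n(\alpha)\phi_n(\beta)$ is not the empirical characteristic function of any sample drawn from the product distribution, so the theorem does not apply to the pair $(\psi,\psi_n)$ verbatim. Your factorization $|uv-u'v'|\le|u|\,|v-v'|+|v'|\,|u-u'|$ together with the unit bound on characteristic functions reduces everything to the genuine marginal ECFs of $X$ and of $Y$, to which the theorem does apply, and is the right way to close that gap. In short: same architecture, but your version repairs two small leaks in the published argument.
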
	
\begin{proof}
	
	Let $\epsilon > 0$. Since ECF is CF, and a product of two CF's also is CF, by Theorem~\ref{thm:csorg} and triangle inequality, we can find $n_{0} \in \mathcal{N}$ such that $\forall n > n_{0}$: $||\Delta - \Delta_{n}||_{\infty}^{T_{n}} =  \sup_{||\gamma||<T_{n}}|\Delta(\gamma) - \Delta_{n}(\gamma)| =  \sup_{||\gamma||<T_{n}}
	|\phi(\gamma) - \psi(\gamma) - \phi_{n}(\gamma) +  \psi_{n}(\gamma) |= \sup_{||\gamma||<T_{n}} |\phi(\gamma) - \phi_{n}(\gamma)  + \psi_{n}(\gamma)-\psi(\gamma)| \leq \sup_{||\gamma||<T_{n}} |\phi(\gamma) - \phi_{n}(\gamma)|  + \sup_{||\gamma||<T_{n}}|\psi(\gamma)-\psi_{n}(\gamma)| \leq \epsilon$, almost surely. From the inverse triangle inequality for norms we have:
	$| ||\Delta||_{\infty}^{T_{n}} -  ||\Delta_{n}||_{\infty}^{T_{n}}  | \leq ||\Delta - \Delta_{n}||_{\infty}^{T_{n}} \leq \epsilon$, almost surely. 
	On the other side, along with the definition of $\kappa(X,Y) := \lim_{n\rightarrow \infty} ||\Delta(\gamma)||_{\infty}^{T_{n}}$ ~\eqref{eq:kim}, this implies that $|\kappa(X,Y) -  ||\Delta_{n}||_{\infty}^{T^{n}}| \leq |\kappa(X,Y) - ||\Delta||_{\infty}^{T^{n}}| + |||\Delta||_{\infty}^{T^{n}} - ||\Delta_{n}||_{\infty}^{T^{n}}|$ will be arbitrarily small almost surely, when $n$ is sufficiently large.
\end{proof}

\textbf{Smoothed estimator.} Smoothed ECF uniformly converges to the corresponding CF in all real space, according to the result of~\citet{Lebedeva2007} (although they prove it for one dimensional case, it can be directly extended to multiple dimensions):

\begin{theorem}
	\label{thm:aricle1}	
	If $\lim_{n \rightarrow \infty} h_{n} = 0$ and $\lim_{n \rightarrow \infty} \frac{\log h_{n}}{n} = 0$ then $\lim_{n \rightarrow \infty} \sup_{\gamma} |\phi(\gamma) - \widetilde{\phi}_{n}(\gamma,h_{n})| = 0$ almost surely for any CF $\phi(\gamma
	)$ and corresponding smoothed ECF $\widetilde{\phi}_{n}(\gamma,h_{n})$.
\end{theorem}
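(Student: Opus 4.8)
Since the one-dimensional case is exactly~\citet{Lebedeva2007}, the plan is to give a self-contained argument in $\mathbb{R}^{d_{X}+d_{Y}}$ that, on bounded regions, is reduced to Theorem~\ref{thm:csorg}, and, on the complement, is handled by the decay at infinity of characteristic functions of absolutely continuous laws. First I would record the algebraic decomposition
\[
\phi(\gamma) - \widetilde{\phi}_{n}(\gamma,h_{n}) = \xi(h_{n}\gamma)\bigl(\phi(\gamma) - \phi_{n}(\gamma)\bigr) + \phi(\gamma)\bigl(1 - \xi(h_{n}\gamma)\bigr),
\]
which, since $|\xi|\le 1$ and $|\phi|\le 1$, gives both the bound $|\phi(\gamma) - \widetilde{\phi}_{n}(\gamma,h_{n})| \le |\phi(\gamma)-\phi_{n}(\gamma)| + |1-\xi(h_{n}\gamma)|$ and the bound $|\phi(\gamma) - \widetilde{\phi}_{n}(\gamma,h_{n})| \le |\phi(\gamma)| + |\phi_{n}(\gamma)|\,|\xi(h_{n}\gamma)|$. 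Then, for fixed $\epsilon>0$, I would pick $\rho>0$ with $|1-\xi(\eta)|<\epsilon$ whenever $\|\eta\|\le\rho$ (continuity of $\xi$ at $0$, $\xi(0)=1$), and $A,B>0$ with $|\phi(\gamma)|<\epsilon$ for $\|\gamma\|>A$ and $|\xi(\eta)|<\epsilon$ for $\|\eta\|>B$ (Riemann--Lebesgue for the characteristic functions $\phi$ and $\xi$).

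Next I would split $\mathbb{R}^{d_{X}+d_{Y}}$ by the sphere $\|\gamma\|=\rho/h_{n}$. On the ball $\|\gamma\|\le\rho/h_{n}$ we have $\|h_{n}\gamma\|\le\rho$, so the first bound gives $|\phi(\gamma)-\widetilde{\phi}_{n}(\gamma,h_{n})|\le \sup_{\|\delta\|\le\rho/h_{n}}|\phi(\delta)-\phi_{n}(\delta)| + \epsilon$; because $h_{n}\to 0$ and $\tfrac{\log h_{n}}{n}\to 0$ give $\tfrac{\log(\rho/h_{n})}{n}\to 0$, Theorem~\ref{thm:csorg} (applied to $\phi$ and its ECF with $T_{n}=\rho/h_{n}$) drives that supremum to $0$ almost surely, so this contribution is $<2\epsilon$ for all large $n$. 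On the complement $\|\gamma\|>\rho/h_{n}$, for $n$ so large that $\rho/h_{n}>A$ we get $|\phi(\gamma)|<\epsilon$, and I would bound $|\phi_{n}(\gamma)|\,|\xi(h_{n}\gamma)|$ by a further dichotomy: if $\|\gamma\|\le B/h_{n}$ then $|\phi_{n}(\gamma)|\le|\phi(\gamma)|+\sup_{\|\delta\|\le B/h_{n}}|\phi(\delta)-\phi_{n}(\delta)|<2\epsilon$ for large $n$ (Theorem~\ref{thm:csorg} once more, now with $T_{n}=B/h_{n}$, legitimate by the identical computation), while if $\|\gamma\|>B/h_{n}$ then $\|h_{n}\gamma\|>B$ forces $|\phi_{n}(\gamma)|\,|\xi(h_{n}\gamma)|\le|\xi(h_{n}\gamma)|<\epsilon$; in either subcase the second bound yields $<3\epsilon$. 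Hence $\sup_{\gamma}|\phi(\gamma)-\widetilde{\phi}_{n}(\gamma,h_{n})|<3\epsilon$ eventually, almost surely; letting $\epsilon=1/k\downarrow 0$ finishes the proof, and to sidestep a union over $k$ I would note that every supremum used is dominated by $\sup_{\|\gamma\|\le h_{n}^{-2}}|\phi(\gamma)-\phi_{n}(\gamma)|$, a single quantity tending to $0$ almost surely since $\log(h_{n}^{-2})/n\to 0$.

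The step I expect to be the real obstacle is the annulus $\rho/h_{n}<\|\gamma\|\le B/h_{n}$: there $h_{n}\gamma$ is neither close enough to the origin for $\xi(h_{n}\gamma)$ to be near $1$ nor far enough out for $|\xi(h_{n}\gamma)|$ to be small, so neither of the ``easy'' bounds alone suffices; the way out, as above, is that this annulus still sits inside a ball whose radius respects Cs\"org\H{o}'s growth condition, which lets $|\phi_{n}|$ inherit the decay of $|\phi|$ there --- and this is precisely the role of the hypothesis $\tfrac{\log h_{n}}{n}\to 0$. A second point I would make explicit is that the proof uses $\phi$ itself vanishing at infinity, i.e.\ absolute continuity of the law of $(X,Y)$: without it the statement fails, since if $(X,Y)$ is a point mass then $\phi_{n}\equiv\phi$, $|\phi|\equiv 1$, and $\sup_{\gamma}|\phi(\gamma)-\widetilde{\phi}_{n}(\gamma,h_{n})|=\sup_{\eta}|1-\xi(\eta)|\not\to 0$; this is an implicit hypothesis inherited from~\citet{Lebedeva2007}, under which the one-dimensional argument transfers coordinate by coordinate, with Euclidean balls replacing intervals.
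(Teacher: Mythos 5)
The paper gives no proof of this theorem: it is imported from \citet{Lebedeva2007} for the one-dimensional case, and the multivariate extension is merely asserted to be ``direct.'' Your self-contained argument is therefore a genuinely different route, and it is essentially correct. The identity $\phi(\gamma)-\widetilde{\phi}_{n}(\gamma,h_{n})=\xi(h_{n}\gamma)\bigl(\phi(\gamma)-\phi_{n}(\gamma)\bigr)+\phi(\gamma)\bigl(1-\xi(h_{n}\gamma)\bigr)$ checks out, the reduction of the ball $\|\gamma\|\le\rho/h_{n}$ and of the annulus up to $B/h_{n}$ to Theorem~\ref{thm:csorg} with $T_{n}=O(1/h_{n})$ is legitimate (and is precisely where the hypothesis $\log h_{n}/n\to 0$ is consumed, since $\log(C/h_{n})/n\to 0$), and the far region is killed by the decay of $\xi$; the final domination by $\sup_{\|\gamma\|\le h_{n}^{-2}}|\phi-\phi_{n}|$ is a harmless refinement (a countable union over $\epsilon=1/k$ of null sets would also do). What your proof buys over the paper's citation is that it exposes two hypotheses the printed statement silently omits: the claim ``for any CF $\phi$'' is false as written --- your point-mass example gives $\sup_{\gamma}|\phi(\gamma)-\widetilde{\phi}_{n}(\gamma,h_{n})|=\sup_{\eta}|1-\xi(\eta)|\not\to 0$ --- so one must assume $\phi$ vanishes at infinity (absolute continuity of the law of $(X,Y)$ being the standard sufficient condition via Riemann--Lebesgue, though strictly only the vanishing itself is used), and likewise $\xi$ must be the CF of an absolutely continuous smoothing law, which the paper's ``continuous distribution (e.g.\ Gaussian)'' only suggests. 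In short: the paper's approach buys brevity; yours buys transparency about which assumptions the theorem actually needs and shows that the multivariate case requires nothing beyond the multivariate form of Theorem~\ref{thm:csorg}.
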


This result implies consistence of smoothed estimator ~\eqref{eq:kacim_estimator_smoothed}:

\begin{corollary}	
	\label{thm:consistensy1}	
	If $\lim_{n \rightarrow \infty} h_{n} = 0$ and $\lim_{n \rightarrow \infty} \frac{\log h_{n}}{n} = 0$ then $$\lim_{n \rightarrow \infty} | \kappa(X,Y) -  \widetilde{\kappa}_{n}(X^{n},Y^{n},h_{n}) | = 0,$$ almost surely.
\end{corollary}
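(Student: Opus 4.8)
The plan is to repeat, essentially verbatim, the argument of Corollary~\ref{thm:consistensy}, but with Theorem~\ref{thm:aricle1} in place of Theorem~\ref{thm:csorg}. The argument becomes slightly cleaner here: Theorem~\ref{thm:aricle1} provides \emph{global} uniform convergence of the smoothed ECF to the CF over all of $\mathbb{R}^{m}$, so there is no need for a truncation radius $T_{n}$, and one can work directly with the unrestricted supremum norm $\|\cdot\|_{\infty}$ appearing in the definitions~\eqref{eq:kim} and~\eqref{eq:kacim_estimator_smoothed}.

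Fix $\epsilon > 0$. Because the hypotheses $h_{n} \to 0$ and $(\log h_{n})/n \to 0$ are exactly those required by Theorem~\ref{thm:aricle1}, applying it to the joint distribution of $(X,Y)$ and to each of the marginals $X$ and $Y$ gives, almost surely, an index $n_{0}$ such that for all $n > n_{0}$ each of $\sup_{\gamma}|\phi(\gamma) - \widetilde{\phi}_{n}(\gamma,h_{n})|$, $\sup_{\alpha}|\phi(\alpha) - \widetilde{\phi}_{n}(\alpha,h_{n})|$ and $\sup_{\beta}|\phi(\beta) - \widetilde{\phi}_{n}(\beta,h_{n})|$ is below $\epsilon/3$. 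With $\gamma = (\alpha^{T},\beta^{T})^{T}$ and $\widetilde{\Delta}_{n}(\gamma,h_{n}) = \widetilde{\phi}_{n}(\gamma,h_{n}) - \widetilde{\phi}_{n}(\alpha,h_{n})\widetilde{\phi}_{n}(\beta,h_{n})$, the triangle inequality followed by the add-and-subtract estimate
\[
\bigl|\phi(\alpha)\phi(\beta) - \widetilde{\phi}_{n}(\alpha,h_{n})\widetilde{\phi}_{n}(\beta,h_{n})\bigr| \le |\phi(\beta)|\,\bigl|\phi(\alpha) - \widetilde{\phi}_{n}(\alpha,h_{n})\bigr| + \bigl|\widetilde{\phi}_{n}(\alpha,h_{n})\bigr|\,\bigl|\phi(\beta) - \widetilde{\phi}_{n}(\beta,h_{n})\bigr|
\]
yields
\[
\bigl|\Delta(\gamma) - \widetilde{\Delta}_{n}(\gamma,h_{n})\bigr| \le \bigl|\phi(\gamma) - \widetilde{\phi}_{n}(\gamma,h_{n})\bigr| + |\phi(\beta)|\,\bigl|\phi(\alpha) - \widetilde{\phi}_{n}(\alpha,h_{n})\bigr| + \bigl|\widetilde{\phi}_{n}(\alpha,h_{n})\bigr|\,\bigl|\phi(\beta) - \widetilde{\phi}_{n}(\beta,h_{n})\bigr|.
\]
Here I use the observation recorded after~\eqref{eq:ecf_smoothed} that a smoothed ECF is again a characteristic function, so $|\widetilde{\phi}_{n}(\alpha,h_{n})| \le 1$, together with $|\phi(\beta)| \le 1$. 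Taking $\sup_{\gamma}$ of both sides therefore gives $\|\Delta - \widetilde{\Delta}_{n}(h_{n})\|_{\infty} \le \epsilon$ almost surely for $n > n_{0}$.

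Finally, the inverse triangle inequality for the supremum norm, $\bigl|\,\|\Delta\|_{\infty} - \|\widetilde{\Delta}_{n}(h_{n})\|_{\infty}\,\bigr| \le \|\Delta - \widetilde{\Delta}_{n}(h_{n})\|_{\infty}$, combined with $\kappa(X,Y) = \|\Delta\|_{\infty}$ and $\widetilde{\kappa}_{n}(X^{n},Y^{n},h_{n}) = \|\widetilde{\Delta}_{n}(h_{n})\|_{\infty}$, gives $|\kappa(X,Y) - \widetilde{\kappa}_{n}(X^{n},Y^{n},h_{n})| \le \epsilon$ almost surely for $n > n_{0}$; since $\epsilon > 0$ was arbitrary, the stated almost-sure convergence follows.

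The only step requiring genuine care is the product-of-marginals term: this is precisely why the uniform bound $|\widetilde{\phi}_{n}| \le 1$ coming from ``a product of two CFs is a CF'' is invoked rather than a cruder estimate. A secondary point is the use of the multidimensional version of Theorem~\ref{thm:aricle1}; the excerpt already asserts that the one-dimensional result of~\citet{Lebedeva2007} extends directly, so I would simply cite that extension. Everything else is the same bookkeeping as in Corollary~\ref{thm:consistensy}, only simpler, since global uniform convergence removes the need to route the argument through the truncated norms $\|\cdot\|_{\infty}^{T_{n}}$.
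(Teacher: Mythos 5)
Your proposal is correct and follows essentially the same route as the paper: the inverse triangle inequality for $\|\cdot\|_{\infty}$, a triangle-inequality split of $\Delta - \widetilde{\Delta}_{n}$, and Theorem~\ref{thm:aricle1} to drive each piece to zero. The one place you are actually more careful than the paper is the product-of-marginals term: the paper simply invokes Theorem~\ref{thm:aricle1} for $\sup_{\gamma}|\psi(\gamma)-\widetilde{\psi}_{n}(\gamma,h_{n})|$, treating the product of two smoothed marginal ECFs as if it were the smoothed ECF corresponding to $\psi$, whereas your add-and-subtract estimate with the bound $|\widetilde{\phi}_{n}|\le 1$ is the step that rigorously justifies that claim.
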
	
\begin{proof}
	Since both $\widetilde{\phi_{n}}(\gamma, h_{n})$ and $\widetilde{\psi_{n}}(\gamma, h_{n})$ are CF's, inverse triangle and triangle inequalities give $| \kappa(X,Y) -  \widetilde{\kappa}_{n}(X^{n},Y^{n},h_{n}) | = | \kappa(X,Y) -  ||\widetilde{\Delta_{n}}(h_{n})||_{\infty} | = | ||\Delta||_{\infty} - ||\widetilde{\Delta}_{n}(h_{n})||_{\infty}| \leq \sup_{\gamma} |\Delta(\gamma) - \widetilde{\Delta_{n}}(\gamma,h_{n})| \leq \sup_{\gamma} | \phi(\gamma) - \widetilde{\phi_{n}}(\gamma,h_{n})|  +\sup_{\gamma} | \psi(\gamma) - \widetilde{\psi_{n}}(\gamma,h_{n})|$, which is arbitrarily small almost surely, given that $n$ is sufficiently large, by Theorem~\ref{thm:aricle1}.
\end{proof}

\textbf{Example.}
$h_{n} := n^{-\alpha}$, $0 < \alpha < 1$. We have $h_{n} \rightarrow 0$, and $\frac{\log h_{n}}{n} = -\frac{\alpha \log n}{n} \rightarrow 0$, when $n \rightarrow \infty$.



\subsection{Estimator computation} 

\begin{algorithm}
	\caption{KacIM gradient estimation.}\label{alg:estimator_computation}
	\begin{algorithmic}
		\Require Number of iterations $N$, batch size $n_{b}$, gradient-based optimizer $GradOpt([parameters],.)$, initial $\alpha \in \mathbb{R}^{d_{X}}, \beta \in \mathbb{R}^{d_{Y}}$.
		\For{$iteration=1$ to $N$}
		\State Sample data batch $(X^{n_{b}},Y^{n_{b}}):=(x_{i},y_{i})_{i=1}^{n_{b}}$.
		\State Normalize $(X^{n_{b}},Y^{n_{b}})$ to zero mean and unit variance (scale invariance).
		\State Calculate  $\Delta_{n}(\alpha, \beta)$.
		\State $\alpha, \beta \rightarrow GradOpt([\alpha, \beta], \Delta_{n}(\alpha,\beta))$.
		\EndFor
	\end{algorithmic}
\end{algorithm}

Algorithm~\ref{alg:estimator_computation} requires to initialize $\alpha$ and $\beta$, and gradient-based (local) optimizer. In our implementation we use uniform initialization of parameters $\alpha$ and $\beta$, and decoupled weight decay regularization optimizer~\citet{Loshchilov2019DecoupledWD}. 
We also empirically observed that estimation of KacIM often has a quite large variance, and normalization of parameters $\alpha$ and $\beta$ to the unit sphere increases estimation stability. After the estimation of KacIM via Algorithm~\ref{alg:estimator_computation}, the evaluation of the estimator  has computation complexity $O(n)$, where $n$ is the sample size.

\subsection{Comparison to the related approaches.}

\textbf{Integral probability metric (IPM)}. IPM~\citet{10.1214/12-EJS722} is a metric between two probability distributions, defined as:

\begin{equation}
\label{eq:ipm}
\mu(P,Q|\mathcal{F}) := \sup_{f \in \mathcal{F}} |\mathbb{E}_{z\sim P} f(z) - \mathbb{E}_{z' \sim Q} f(z') |.
\end{equation}
Plugging $P = P_{X,Y}$, $Q = P_{X}P_{Y}$, and taking various $\mathcal{F}$ different statistical independence measures can be obtained. For example, HSIC~\citet{Gretton2005MeasuringSD} is recovered by taking reproducing kernel Hilbert space~\citet{10.5555/3279302} $\mathcal{F}_{RKHS} = \{f: ||f||_{HS} \leq 1 \}$, where $||.||_{HS}$ is Hilbert-Schmidt norm.
If $\mathcal{F}_{KacIM} = \{ f: f(x,y) := e^{i(\alpha^{T}x+\beta^{T}y)}, i =\sqrt{-1}, (\alpha,\beta) \in \mathbb{R}^{d_{X}}\times\mathbb{R}^{d_{Y}}\}$ we have KacIM. Despite that $\mathcal{F}_{KacIM}$ is parametrised only with $d_{X}+d_{Y}$ parameters, while $\mathcal{F}_{RKHS}$ is infinite-dimensional, KacIM can detect statistical dependence (Corollary~\ref{thm:properties}).


\textbf{Comparison to mutual information estimation.} One of the most widely used statistical dependence measures, mutual information $I(X,Y) := \mathbb{E}_{X,Y} \log\frac{p(X,Y)}{p(X)p(Y)}$~\citet{Cover2006}, also vanishes (i.e. $I(X,Y) = 0$) if and only if $X$ and $Y$ are statistically independent.
The neural estimation of mutual information (MINE,~\citet{pmlr-v80-belghazi18a}) uses its variational (Donsker-Varadhan) representation $I(X,Y) \approx max_{\theta}  \mathbb{E}_{X,Y} f(x,y|\theta) - \log( \mathbb{E}_{X}\mathbb{E}_{Y} e^{f(x,y|\theta)})$  is often used, since it allows to avoid density estimation (here $f(x,y|\theta)$ is a neural network with parameters $\theta$). The estimation is also an iterative process, similar to Algorithm~\ref{alg:estimator_computation}. In this case, optimization is conducted over the space of neural network parameters, which is substantially larger than the number of parameters needed to estimate $\KacIM$ (i.e. $d_{x} + d_{y}$ parameters, when even $f$ corresponding to a single linear layer has $d_{x}d_{y}$ parameters).

\section{Experiments}
\label{section:experiments}

Further, we conduct an empirical investigation of KacIM. We analyze its behavior in experiments with generated data, as well as with real data, using KacIM as the optimizable component of a cost function in supervised feature extraction and classifier regularization problems. For the sake of simplicity in our experiments we use the direct estimator~\eqref{eq:estimator}.



\subsection{Generated data}

\paragraph{Statistical independence vs non-linear statistical dependence.} We begin with simulated multivariate data with non-linear dependence between input and output, corrupted with additive noise.

\begin{figure}%
	\centering
	\subfloat{{\includegraphics[scale=0.30]{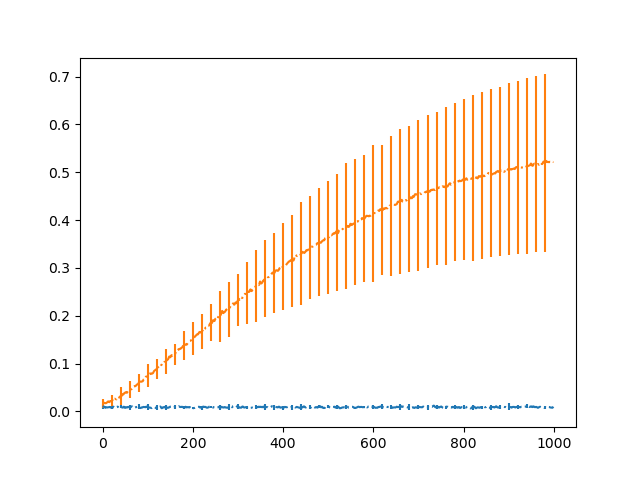} }}%
	\qquad
	\subfloat{{\includegraphics[scale=0.30]{./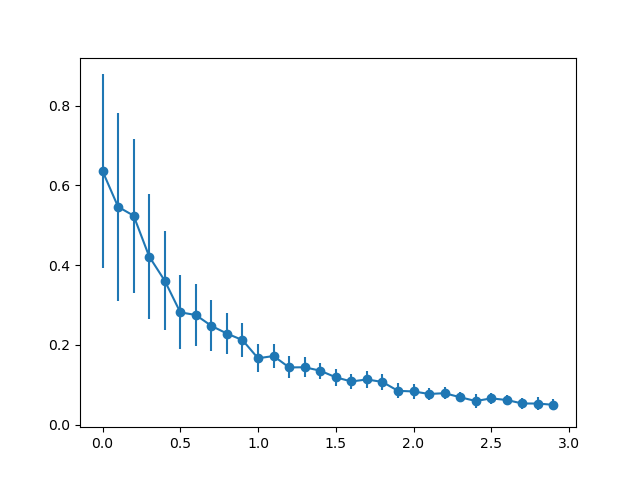} }}%
	\caption{\textbf{Convergence of KacIM estimator, and its behavior when noise variance increases.} Left: an example of KacIM estimator convergence including standard deviation bars for independent data (blue), additive (orange) noise scenario (in the figure, $x$ axis - iteration, and $y$ axis - corresponding value of KacIM). Right: noise of standard deviation $\lambda$ ($x$ axis) versus final iteration KacIM value ($y$ axis), indicating standard deviations for each ($25$ runs). KacIM values for larger noise variances saturate as in the tail of graph.}
	\label{fig:experiments_simulation}
\end{figure}

Figure~\ref{fig:experiments_simulation} reflects the estimated KacIM values during iterative adaptation ($1000$ iterations). In the case of independent data, both $x_{i}$ and $y_{i}$ ($d_{x} = 512$, $d_{y} = 512$) are sampled from Gaussian distribution with zero mean and unit variance, independently. In the case of dependent covariates the dependent variable is generated according to $y_{i} = \sin(A x_{i}) + \cos(A x_{i}) + \lambda \epsilon_{i}$ ($\lambda = 0.20$), where $A$ is $d_{x} \times d_{y}$ random projection matrix with uniform entries, $\epsilon_{i} \sim N(0,1)$, and $\epsilon_{i} \perp x_{i}$.

Figure~\ref{fig:experiments_simulation} illustrates the typical behavior of estimator during optimization (Algorithm~\ref{alg:estimator_computation}): when $x_{i}$ and $y_{i}$ are independent, the estimator~\eqref{eq:estimator} is resistant to maximization, and oscillates near zero. On the other hand, when the data is not independent, the condition~\eqref{eq:kac_theorem} is violated, and maximization of estimator~\eqref{eq:estimator} is possible. We also investigate the above experiment scenario with linear statistical dependence and other non-linearities and observed the aforementioned empirical behavior.

\paragraph{Increasing noise decreases the value of KacIM.} In this simulation, we use the same additive noise setting as in the previous paragraph, but evaluate all $\lambda \in [0.1, 3.0]$, with step $0.1$.
The right component of Figure~\ref{fig:experiments_simulation} shows that the estimated value of KacIM negatively correlates with $\lambda$, and therefore the proposed measure was able not only to detect whether statistical independence is present but also to quantitatively evaluate it.
 {\color{black} We also conducted this experiment for uniform, laplacian, and log-normal noise and observed the same effect (see supplementary material). }

{\color{black} \paragraph{Comparison with distance correlation.} In this experiment we empirically compare KacIM's, and distance correlation's invariance to the dimensionality of data, given the same probability distribution. The unbiased estimator of distance correlation~\citet{10.1214/14-AOS1255} in our experiments was considerably more robust with respect to this curse-of-dimensionality effect, than the biased one. However, Figure~\ref{fig:experiments_simulation_dcor} shows that even when the unbiased estimator of distance correlation is applied, the gap between the independent and dependent cases still tends to increase with the dimension of data, while KacIM shows better invariance to the dimension of data.}

\begin{figure}%
	\centering
	\subfloat{{\includegraphics[scale=0.30]{./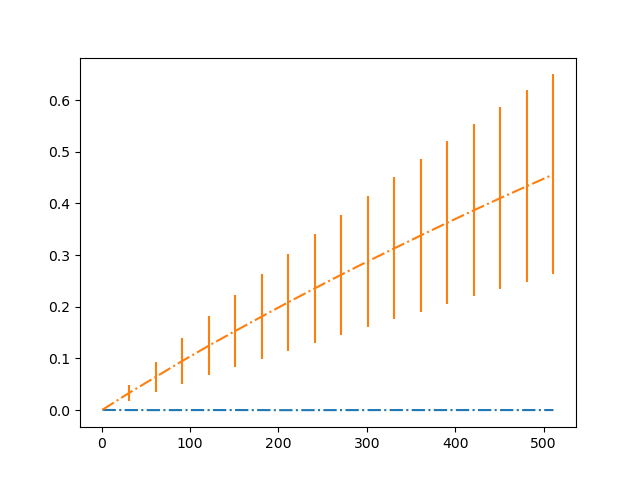} }}%
	\qquad
	\subfloat{{\includegraphics[scale=0.30]{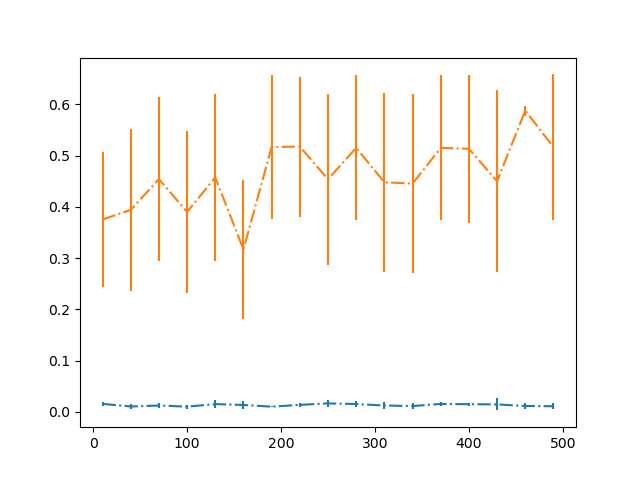} }}%
	
	\caption{\textbf{Comparison of distance correlation (left, unbiased estimator) and KacIM (right)}.  The dimension of data is on the $x$ axis, and on the $y$ axis is the value of the corresponding measure. {\color{black} The blue graph corresponds to the data that is statistically independent, and the orange one - to the statistically dependent case. The error bars indicate corresponding standard deviation (10 runs).}}
	\label{fig:experiments_simulation_dcor}
\end{figure}

\subsection{Supervised feature extraction}
\label{sec:feature_extraction}

Let us denote by $T := (x_{i},y_{i})_{i=1}^{N}$ a classification dataset, consisting of $N$ pairs of $d_{x}$-dimensional inputs $x_{i}$, and $d_{y}$-dimensional one-hot-encoded outputs $y_{i}$.

In the supervised feature extraction experiments we use a collection of classification datasets from OpenML~(\citet{OpenML2013}), which cover different domains, input and output dimensionalities, including ill-defined cases (i.e., input dimension exceeds the number of data points). We follow the dependency maximization scheme (e.g.~\citet{10.1145/1839490.1839495,EigenHSIC}), using $\KacIM$ to conduct supervised linear feature extraction by seeking:

\begin{equation}
\label{eq:kim_feature_extraction}    
W^{*} = arg \max_{W} D(Wx, y) - \lambda Tr\{(W^{T}W-I)^{T}(W^{T}W-I) \},
\end{equation}
where $D(.,.)$ is a dependence measure, the regularization term is controlled by multiplier $\lambda \geq 0$, which enforces semi-orthogonality of projection matrix $W^{*}$, and $Tr\{.\}$ denotes matrix trace operator. 

To quantitatively evaluate features, we use logistic regression~(\citet{mccullagh1989generalized}) classifier accuracy, measured on the testing set. The logistic regression classifier is trained using the features extracted from the training set.  

In all the experiments~\eqref{eq:kim_feature_extraction} the cost function is optimized iteratively ($250$ iterations), learning rate of the decoupled weight decay regularization optimizer~(\citet{Loshchilov2019DecoupledWD}) of Algorithm~\ref{alg:estimator_computation}) was set to $0.007$, and $\lambda$ to $1.0$ to quickly ensure orthogonal projection matrices. We simultaneously optimize the parameters of KacIM ($\alpha$ and $\beta$) and projection matrix $W$ of~\eqref{eq:kim_feature_extraction}. Since, according to our empirical observations, large batch sizes often resulted in more stable estimation, the batch size in all the experiments was set to $1024$. After the optimization, the feature extraction is conducted by $f(x) = W^{*}x$, where $x$ is the original input vector, and $f(x)$ is the corresponding feature vector. To optimize feature dimension, in each experiment, we randomly split the data into training, validation, and testing sets of proportions $(0.48, 0.12, 0.4)$, respectively. For each method, we select the feature dimension, which corresponds to the maximal validation accuracy of this method, checking all the feature dimensions starting with $10$ with a step of $10\%$ of $d_{x}$. Using the selected feature dimensions, we evaluate all the methods and report testing accuracies in  Table~\ref{table:classification_accuracies}, and Figure.~\ref{fig:feature_dim_vs_accuracy} shows average testing accuracies and corresponding standard deviations for all the analyzed feature dimensions.

\textbf{Baselines.} We compare the following five baselines: unmodified inputs (\verb|RAW| column in Table~\ref{table:classification_accuracies}),~\eqref{eq:kim_feature_extraction} scheme with three dependence measures: KacIM, recently proposed matrix-based mutual information~\citet{DBLP:conf/aaai/YuAYJP21}, and HSIC with gaussian kernel~\citet{Gretton2005MeasuringSD} and scale parameter set to the median distance~\citet{garreau2017large}. HSIC was chosen because of its equivalence with distance covariance~\citet{10.1214/13-AOS1140}. We indicate these dependence-based baselines as \verb|KacIM|, \verb|MI|, and \verb|HSIC|, respectively. We also include neighborhood component analysis~\citet{NIPS2004_42fe8808} baseline (\verb|NCA|), which is especially tailored for classification tasks. 

\textbf{Evaluation metrics.} Let us denote $a_{r,p}(b,b'|d) = 1$ if for $r$ runs on dataset $d$ the average testing accuracy of baseline $b$ is statistically significantly larger that that of $b'$ with $p$-value threshold $p$. For statistical significance assesment we use Wilcoxons signed rank test~\citet{Wilcoxon1992}. We compute ranking score:
\begin{equation}
\label{eq:rs}
RS(b) = \sum_{d} \sum_{b' \neq b} a_{25,0.01}(b,b'|d).
\end{equation}

\textbf{Results.} Table~\ref{table:classification_accuracies} demonstrates that KacIM-based supervised feature extraction~\eqref{eq:kim_feature_extraction} achieved highest ranking, and in one data set it also uniformly and with statistical significance outperformed all the remaining baselines. Matrix-based mutual information~\citet{DBLP:conf/aaai/YuAYJP21} and HSIC also showed good performance, similar to that of our approach. NCA in one dataset uniformly outperformed other baselines. However, in contrast to dependence-based feature extraction approach~\eqref{eq:kim_feature_extraction}, NCA explicitly optimizes for classification accuracy rather than a more abstract dependency of features $f(x)$ with the dependent variable $y$. The experiment results, provided in Table~\ref{table:classification_accuracies} and 
Figure.~\ref{fig:feature_dim_vs_accuracy} demonstrate the potential applicability of KacIM for the empirical inference from real data. 

\begin{table}	
	\centering
	\begin{tabular}{ p{3.0cm}|p{2.2cm}|p{1.1cm}|p{1.1cm}|p{1.1cm}|p{1.1cm}|p{1.1cm}  }
		\hline
		Dataset & $N$/$d_{x}$/$n_{c}$. & RAW & KacIM & MI & HSIC & NCA  \\
		\hline
usps & (9298,256,10)   &  0.923  &  \textbf{0.944}  &  0.931  &  0.927  &  0.941 \\
amazon-commerce-reviews (acr) & (1500,10000,50)   &  0.275  &  0.724  &  0.736  &  0.741  &  0.587 \\
ionosphere & (351,34,2)   &  0.873  &  0.891  &  0.909  &  0.898  &  0.912 \\
wdbc & (569,30,2)   &  0.706  &  0.943  &  0.940  &  0.835  &  \textbf{0.965} \\
scene & (2407,299,2)   &  0.884  &  0.977  &  0.956  &  0.976  &  0.958 \\
eeg-eye-state & (14980,14,2)   &  0.551  &  0.634  &  0.637  &  0.572  &  0.639 \\
micro-mass & (360,1300,10)   &  0.885  &  0.938  &  0.938  &  0.940  &  0.897 \\
mfeat-zernike & (2000,47,10)   &  0.742  &  0.818  &  0.814  &  0.805  &  0.809 \\

		\hline
	Ranking score~\eqref{eq:rs}	& & & 11 & 8 & 5 & 8 \\
	\hline		
	\end{tabular}
	\caption{\textbf{Classification accuracy comparison, when feature dimension was optimized}. $N$ denotes full dataset size, $d_{x}$ - input dimensionality, and $n_{c}$ - number of classes.  Best performing method that is also statistically significant comparing to the all remaining methods (Wilcoxon's signed rank test, 25 runs, $p$-value threshold $0.01$) is indicated in bold text. Method ranking row denotes the number of cases where given method outperformed some other method (excluding RAW) with the above statistical significance.}
	\label{table:classification_accuracies}	
\end{table}

\begin{figure}%
	\centering
	\subfloat{{\includegraphics[scale=0.55]{./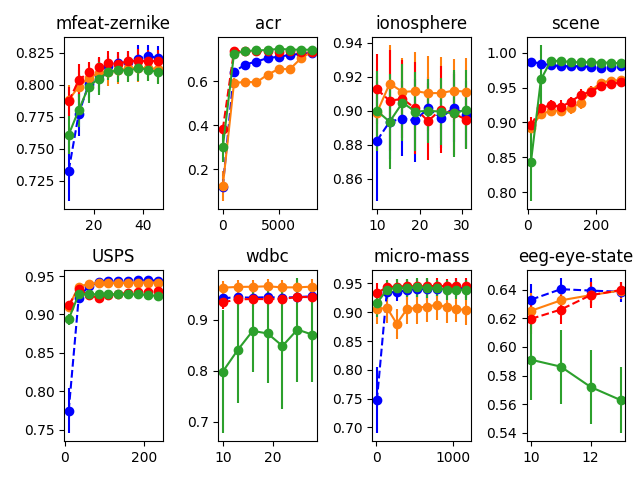} }}%
	\caption{\textbf{Classification accuracy comparison for different feature dimensions}. Feature dimension ($x$ axis) vs average testing accuracies ($y$ axis), including corresponding standard deviations (25 runs). Blue - KacIM, red - MI, green - HSIC, orange - NCA. }
	\label{fig:feature_dim_vs_accuracy}
\end{figure}

\subsection{Regularisation}

In regularization experiments, we investigate performance on the skin lesion classification task (HAM10000 dataset,~\citet{ham10000}). It is a binary classification dataset, consisting of $10605$ images, which should be classified as benign or malignant (e.g. Figure~\ref{fig:Pneumonia_dataset_examples}).

We use classifier consisting \verb|ResNet18| encoder, denoted as $\phi(.|\theta_{enc})$ with initial parameters $\theta_{enc}$ pre-trained on ImageNet~\citet{deng2009imagenet}) classification task, and attached fully connected, randomly initialized classification head $f(.|\theta_{class})$, having the following structure: 

$$Linear_{512\times 128}, ReLU, BatchNorm_{128}, Linear_{128 \times 32}, ReLU, BatchNorm_{32}, Linear_{32 \times 2}.$$

We denote our classification network as $y(x|\theta_{enc}, \theta_{class}) = f(\phi(x|\theta_{enc})|\theta_{class})$, where $x$ is $224\times 224$ size input image. We investigate the following regularized loss, which additionally maximizes the dependency of the bottleneck the feature $\phi(x|\theta_{enc})$ and target variable $y$ (one-hot encoding):

\begin{figure}%
	\centering
	\subfloat{{\includegraphics[scale=0.4]{./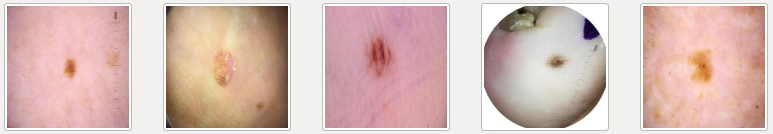} }}%
	\qquad
	\subfloat{{\includegraphics[scale=0.4]{./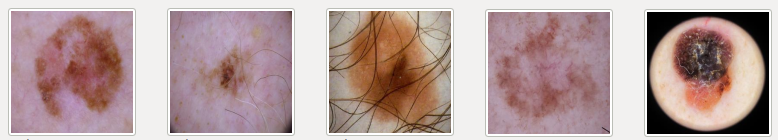} }}%
	\caption{\textbf{Examples from skin lesion classification dataset}. Top figure - benign moles, bottom figure - malignant tumors.}
	\label{fig:Pneumonia_dataset_examples}
\end{figure}

\begin{equation}
\label{eq:regularizer1}
Loss(\theta_{0},\theta_{1}) := (1-\rho)\CE(y(x|\theta_{enc}, \theta_{class}),y) - \rho \widehat{\kappa}(\phi(x|\theta_{enc}),y),
\end{equation}

\noindent where $\CE(.,.)$ is cross-entropy loss, $\rho \geq 0$ is regularization parameter (in our experiments $\rho = 0.2$). 

For optimization, we use decoupled weight decay regularization optimizer, as in previous experiments, with a learning rate set to $0.0002$, batch size of $128$, and weight decay parameter set to $0.00001$. The internal optimizer of KacIM was also of the same type, its learning rate (see Algorithm~\ref{alg:estimator_computation}) was set to $0.07$, and weight decay parameters to $0.01$.

\begin{table}[!htb]
    \begin{minipage}{.5\linewidth}
      \centering
	\begin{tabular}{ p{2cm}|p{2cm}}
	    \hline	    
	    \multicolumn{2}{c}{Regularized ($\rho = 0.2$) } \\
		\hline
		\textbf{95.6}  &    \textbf{4.39} \\		
		\hline
		10.11 &   89.88 \\		
		\hline
	\end{tabular}
    \end{minipage}%
    \begin{minipage}{.5\linewidth}
      \centering
	\begin{tabular}{ p{2cm}|p{2cm}} 
	    \hline	    
	    \multicolumn{2}{c}{Not regularized ($\rho = 0.0$) } \\	
		\hline	
		94.82  &     5.18 \\		
		\hline
		9.51 &   90.49 \\	
		\hline		
	\end{tabular}		
    \end{minipage} 
	\caption{\textbf{Melanoma classification confusion matrix (evaluated on the testing set) comparison of regularized and not regularized model}. Bold text indicates that the regularized model's performance is statistically significant comparing to its counterpart  (Wilcoxon's signed rank test, $20$ runs, $p$-value threshold $0.05$).}    
    \label{table:regularisation_classification_accuracies}
\end{table}

In each experiment we train all classifiers parameters $\theta_{enc}$ and $\theta_{class}$ ($3$ epochs) with randomly split training and testing data (equal proportion). The average performance characteristics ($20$ experiments) reported in Table~\ref{table:regularisation_classification_accuracies} show that regularization~\eqref{eq:regularizer1} slightly (but with $<0.5\%$ p-value) improved the upper row of model's confusion matrix.

\section{Conclusion} 

\label{section:conclusion}
\textbf{Results.} In this article we propose statistical dependence measure, KacIM, which corresponds to the $L^{\infty}$ norm of the difference between joint CF and the product of marginal ones, its estimators, and conduct investigation of their properties.
The proposed measure is differentiable and thereby can be integrated with various modern machine learning and deep learning methods. In theory, KacIM can detect arbitrary statistical dependencies between pairs of random vectors of possibly different dimension. 
In the conducted simulations, KacIM demonstrated to be able to measure both linear and non-linear statistical dependencies in high dimensional data, and to be more robust to the curse of dimensionality (Figure~\ref{fig:experiments_simulation_dcor}), against distance correlation. The experiments with real data reveal that when applied to dependency-based supervised feature extraction KacIM outperformed baselines, which contain HSIC/distance correlation, recently proposed mutual information~\citet{DBLP:conf/aaai/YuAYJP21}, and NCA (see Table~\ref{table:classification_accuracies}). The experiments with  neural network regularization show that KacIM-based optimizations in certain datasets increase the classification performance characteristics compared to the alternatives.

\textbf{Limitations.} 
The main limitations of this study in our opinion are associated with the estimation efficiency issues, and empirics. Regarding the first one, the interpretation of KacIM for large values is not clear enough for us and needs further analysis, as well as its estimator properties, and connections to the other related approaches under more general data distribution assumptions. 
The estimator parameter initialization and its normalization also seem to be important and yet not sufficiently clear issues. Although we conducted experiments both with generated and real data, the empirical efficiency of KacIM in our opinion needs additional exploration as well.

\textbf{Potential applications and future work.} KacIM can be further investigated in several directions (e.g. multiple variables, normalization). Since in this study, the conducted empirical analysis is quite limited, compared to possible applications of dependence measures in various learning tasks, we also see an exploration of KacIM in causality, information bottleneck theory, self-supervised learning, representation learning, and other modern problems, where these measures are used to define a criterion of optimization, as future work.

\section{Acknowledgements}
We sincerely thank Dr. Dominik Janzing, Dr. Pranas Vaitkus, Dr. Marijus Radavi\v{c}ius, Dr. Linas Petkevi\v{c}ius, Dr. Aleksandras Voicikas, Dr. Osvaldas Putkis, and colleagues from Neurotechnology for discussions. We also are grateful to Neurotechnology for supporting this research.


\bibliographystyle{apalike}

{\footnotesize
	\bibliography{bibliography}}

\end{document}